\g@addto@macro{\UrlBreaks}{\UrlOrds}
\newtheorem{thm}{Proposition}
\newtheorem{dfn}{Definition}
\newcommand{\I}{\bm{I}}
\newcommand{\R}{\mathbb{R}}
\newcommand{\K}{\bm{K}}
\newcommand{\U}{\bm{U}}
\newcommand{\fhv}{\bm{\hat{f}}}
\newcommand{\fh}{\hat{f}}
\newcommand{\xv}{\bm{x}}
\newcommand{\xvi}{\bm{x_i}}
\newcommand{\xvj}{\bm{x_j}}
\newcommand{\xpvj}{\bm{x'_j}}
\newcommand{\xsv}{\bm{x^*}}
\newcommand{\xpv}{\bm{x'}}
\newcommand{\betav}{\bm{\beta}}
\newcommand{\yv}{\bm{y}}
\newcommand{\dv}{\bm{d}}
\newcommand{\dvi}{\bm{d_i}}
\newcommand{\kv}{\bm{k}}
\newcommand{\X}{\bm{X}}
\newcommand{\Xp}{\bm{X'}}
\newcommand{\Hh}{\mathcal{H}}
\title{Bandwidth Selection for Gaussian Kernel Ridge Regression via Jacobian Control}
\author{%
  Oskar Allerbo \\
  Mathematical Sciences\\
  University of Gothenburg and Chalmers University of Technology\\
  \texttt{allerbo@chalmers.se} \\
  \And
  Rebecka J\"ornsten\\
  Mathematical Sciences\\
  University of Gothenburg and Chalmers University of Technology\\
  \texttt{jornsten@chalmers.se} \\
}
\begin{document}
\maketitle

\begin{abstract}
Most machine learning methods require tuning of hyper-parameters. For kernel ridge regression with the Gaussian kernel, the hyper-parameter is the bandwidth. The bandwidth specifies the length scale of the kernel and has to be carefully selected to obtain a model with good generalization. The default methods for bandwidth selection, cross-validation and marginal likelihood maximization, often yield good results, albeit at high computational costs. 
Inspired by Jacobian regularization, we formulate an approximate expression for how the derivatives of the functions inferred by kernel ridge regression with the Gaussian kernel depend on the kernel bandwidth. We use this expression to propose a closed-form, computationally feather-light, bandwidth selection heuristic, based on controlling the Jacobian. In addition, the Jacobian expression illuminates how the bandwidth selection is a trade-off between the smoothness of the inferred function and the conditioning of the training data kernel matrix. We show on real and synthetic data that compared to cross-validation and marginal likelihood maximization, our method is on pair in terms of model performance, but up to six orders of magnitude faster.
\end{abstract}

\textbf{Keywords:} Kernel Ridge Regression, Bandwidth Selection, Jacobian Regularization

\section{Introduction}
Kernel ridge regression, KRR, is a non-linear, closed-form solution regression technique 
used within a wide range of applications \citep{zahrt2019prediction, ali2020complete, chen2021optimizing, fan2021well, le2021fingerprinting, safari2021kernel, shahsavar2021experimental, singh2021neural, wu2021increasing, chen2022kernel}. It is related to Gaussian process regression \citep{krige1951statistical, matheron1963principles, williams2006gaussian}, but with a frequentist, rather than a Bayesian, perspective.
Apart from being useful on its own merits, in recent years, the similarities between KRR and neural networks have been highlighted, making the former an increasingly popular tool for gaining better theoretical understandings of the latter \citep{belkin2018understand, jacot2018neural, chen2020deep, geifman2020similarity, ghorbani2020neural, ghorbani2021linearized, mei2021generalization}.

However, kernelization introduces hyper-parameters, which need to be carefully tuned in order to obtain good generalization. The bandwidth, $\sigma$, is a hyper-parameter used by many kernels, including the Gaussian, or radial basis function, kernel. The bandwidth specifies the length scale of the kernel.
A kernel with a too small bandwidth will treat most new data as far from any training observation, while a kernel with a too large bandwidth will treat each new data point as basically equidistant to all training observations. None of these situations will result in good generalization.

The problem of bandwidth selection has been extensively studied for kernel density estimation, KDE, which is the basis for KDE-based kernel regression, such as the Nadaraya-Watson estimator \citep{nadaraya1964estimating, watson1964smooth} and locally weighted regression \citep{cleveland1988locally}. \citet{kohler2014review} review existing methods for bandwidth selection for KDE-based kernel regression, methods that all make 
varyingly strong assumptions on the underlying data density and smoothness of the non-parametric regression model, and how the latter can be approximately estimated. On one end of the spectrum, cross-validation and marginal likelihood maximization make almost no assumptions on the underlying data structure, resulting in very flexible, but computationally heavy, estimators, with high variance. On the other end, with strong assumptions on the underlying data structure, Silverman's rule of thumb, originally from 1986, \citep{silverman2018density} is a computationally light estimator with low variance, but possibly far from optimal. Other approaches on the spectrum include \citet{park1990comparison}, \citet{sheather1991reliable}, and \citet{fan1995data}.

Although similar in name and usage, KRR and KDE-based kernel regression are not the same. While KDE-based kernel regression estimates the probability density of the data, and uses this density to estimate $\mathbb{E}(y|\xv)$, KRR takes a functional perspective, directly estimating $\hat{y}=\fh(\xv)$, similarly to how is done in neural networks.  

For neural networks, Jacobian regularization, which penalizes the Frobenius norm of the Jacobian, $\left\|\frac{\partial \fhv(\xv)}{\partial \xv}\right\|_F^2$, has recently been successfully applied to improve generalization \citep{jakubovitz2018improving, chan2019jacobian, hoffman2019robust, finlay2020train, bai2021stabilizing}.
The Jacobian penalty is a non-linear generalization of the linear ridge penalty. To see this, consider the linear model $\fh(\xv)=\xv^\top\betav$, for which the Jacobian penalty becomes exactly the ridge penalty, $\|\betav\|_2^2$. Thus, both Jacobian and ridge regularization improve generalization by constraining the derivatives of the inferred function. 

This connection motivates our investigation into how Jacobian constraints can be applied for bandwidth selection in KRR: If we knew how the kernel bandwidth affects the Jacobian of the inferred function, then we could use Jacobian control as a criterion for selecting the bandwidth.

Our main contributions are:
\begin{itemize}
\item
We derive an approximate expression for the Jacobian of the function inferred by KRR with the Gaussian kernel.
\item
We propose a closed-form, computationally feather-light, bandwidth selection method for KRR with the Gaussian kernel, based on controlling the approximate Jacobian. 
\item
We show on synthetic and real data that Jacobian-based bandwidth selection outperforms cross-validation and marginal likelihood maximization in terms of computation speed, and Silverman's method in terms of model performance.
\end{itemize}

\section{Bandwidth Selection through Jacobian Control}
\label{sec:method}
Consider the left panel of Figure \ref{fig:bandwidth_demo}. When large (absolute) derivatives of the inferred function are allowed, the function varies more rapidly between observations, while a function with constrained derivatives varies more smoothly, which intuitively improves generalization. However, the derivatives must not be too small as this leads to an overly smooth estimate, as seen in the right panel.

\begin{figure}
\center
\includegraphics[width=\textwidth]{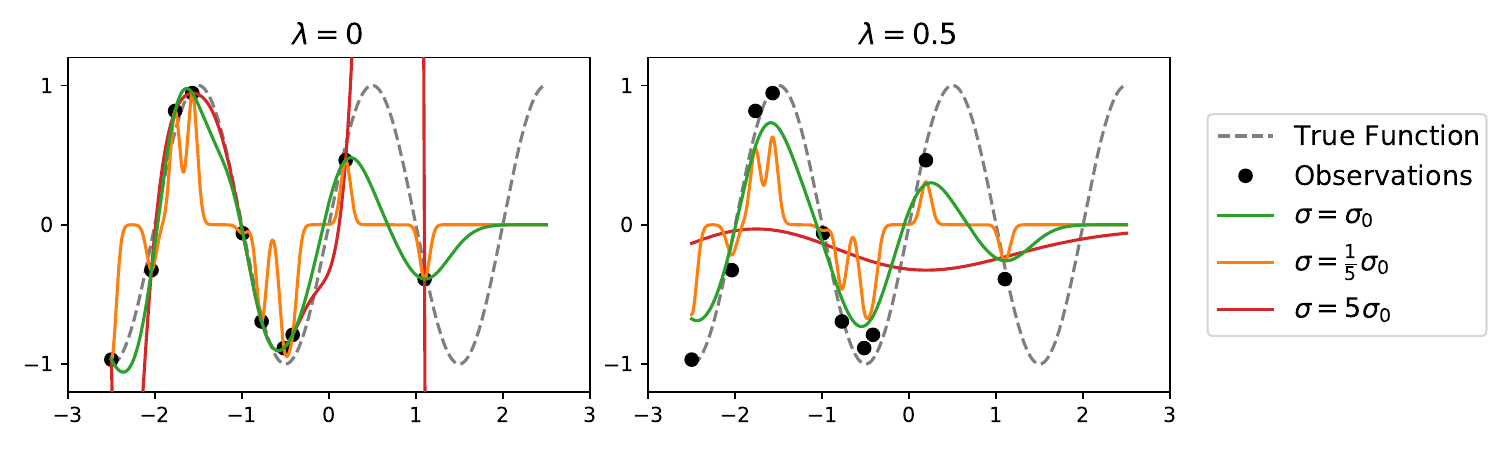}
\caption{Kernel ridge regression with different bandwidths and different regularizations, where $\sigma_0$ is the bandwidth proposed by the Jacobian method, and $\lambda$ is the strength of the regularization. In the absence of regularization, regardless of the bandwidth, the inferred function perfectly interpolates the training data, i.e.\ it hits all training observations. When the bandwidth is too small, the kernel considers most new observations as far away from any training data and quickly resorts to its default value, 0. A too large bandwidth, on the other hand, results in extreme predictions between some of the observations. The addition of regularization affects larger bandwidths more than smaller ones. A too large bandwidth, in combination with regularization, produces a function that is too simple to capture the patterns in the training data.}
\label{fig:bandwidth_demo}
\end{figure}

The functions in Figure \ref{fig:bandwidth_demo} are all constructed using kernel ridge regression, KRR, with the Gaussian kernel. For training data $\X\in\R^{n\times p}$ and $\yv \in \R^n$, the objective function of KRR is
\begin{equation}
\label{eq:krr}
\min_{f\in \Hh_k}\left\|\yv-\begin{bmatrix} f(\bm{x_1})&\dots&f(\bm{x_n})\end{bmatrix}^\top\right\|_2^2+\lambda\left\|f\right\|_{\Hh_k}^2.
\end{equation}
$\Hh_k$ denotes the reproducing kernel Hilbert space corresponding to the symmetric, positive semi-definite kernel function $k(\xv,\xpv)$, and $\lambda\geq 0$ is the regularization strength. Solving Equation \ref{eq:krr}, a prediction $\fh(\xsv)\in\R$, where $\xsv\in\R^p$, is given by
\begin{equation}
\label{eq:krr_s}
\fh(\xsv)=\kv\left(\xsv,\X\right)^\top\cdot\left(\K\left(\X,\X\right)+\lambda\I\right)^{-1}\cdot\yv,
\end{equation}
where $\kv(\xsv,\X)\in \R^{n}$ and $\K(\X,\X)\in \R^{n\times n}$ are two kernel matrices, $\kv(\xsv,\X)_{i}=k(\xsv,\xvi)$ and $\K(\X,\Xp)_{ij}=k(\xvi,\xpvj)$.

The Gaussian kernel is given by
\begin{equation}
\label{eq:gauss_kern}
k_G(\xv,\xpv,\sigma):=\exp\left(-\frac{\|\xv-\xpv\|_2^2}{2\sigma^2}\right),
\end{equation}
where the bandwidth, $\sigma$, specifies the length-scale of the kernel, i.e.\ what is to be considered as ``close''. 

Returning to our aspiration from above, we would like to select $\sigma$ in Equation \ref{eq:gauss_kern} to control $\left\|\frac{\partial \fh(\xsv)}{\partial \xsv}\right\|_F=\left\|\frac{\partial \fh(\xsv)}{\partial \xsv}\right\|_2$, with $\fh(\xsv)$ given by Equation \ref{eq:krr_s}. 

In general, there is no simple expression for $\left\|\frac{\partial \fh(\xsv)}{\partial \xsv}\right\|_2$, but in 
Definition \ref{dfn:approx_jac}, we state an approximation that is based on derivations that we will present in Section \ref{sec:jac_just}.

\begin{dfn}[Approximate Jacobian Norm]
\label{dfn:approx_jac}
\begin{equation}
\label{eq:approx_jac}
\begin{aligned}
J_2^a(\sigma)&=J_2^a(\sigma,l_{\max},n,p,\lambda):=\\
&\frac1{\sigma}\cdot\frac1{n\cdot\exp\left(-\left(\frac{((n-1)^{1/p}-1)\pi\sigma}{2l_{\max}}\right)^2\right)+\lambda}
\cdot C(n,\|\yv\|_2),
\end{aligned}
\end{equation}
\end{dfn}
where $l_{\max}$ denotes the maximum distance between two training observations, and $C(n,\|\yv\|_2)$ is a constant with respect to $\sigma$. 

\textbf{Remark 1}: Since we are only interested in how $J^a_2$ depends on $\sigma$, we will henceforth, with a slight abuse of notation, omit the constant $C(n,\|\yv\|_2)$. 

\textbf{Remark 2}: Technically, since we use a univariate response, $\frac{\partial \fh(\xsv)}{\partial \xsv}$ is a special case of the Jacobian, the gradient. We chose, however, to use the word Jacobian, since nothing in our derivations restricts us to the univariate case.

\textbf{Remark 3}: We will refer to the two $\sigma$ dependent factors in Equation \ref{eq:approx_jac} as 
\begin{equation}
\label{eq:jajb}
\begin{aligned}
j_a(\sigma):=\frac1\sigma\quad\text{and}\quad
j_b(\sigma):=\frac1{n\cdot\exp\left(-\left(\frac{((n-1)^{1/p}-1)\pi\sigma}{2l_{\max}}\right)^2\right)+\lambda}.
\end{aligned}
\end{equation}

Proposition \ref{thm:best_sigma} below characterizes how the approximate Jacobian norm, $J^a_2$, depends on $\sigma$. Depending on $\lambda$, it can behave in three different ways: In the absence of regularization, $J^a_2$ becomes arbitrarily large for $\sigma$ small or large enough and enjoys a global minimum, $\sigma_0$, which is consistent with the left panel of Figure \ref{fig:bandwidth_demo}. However, as soon as regularization is added, $J^a_2$ goes to 0 as bandwidth goes to infinity, as indicated in the right panel. As long as the regularization parameter $\lambda\leq 2ne^{-3/2}\approx 0.45n$, there still exists a local minimum at $\sigma_0$. This is further illustrated in Figure \ref{fig:appr_jac_demo}, where we plot $J^a_2$ together with its components $j_a$ and $j_b$ for three different values of $\lambda$, reflecting the three types of behavior. 
Since $j_b$ is bounded by $1/\lambda$, for $\lambda$ large enough it is negligible compared to $j_a$.

\begin{figure}
\center
\includegraphics[width=\textwidth]{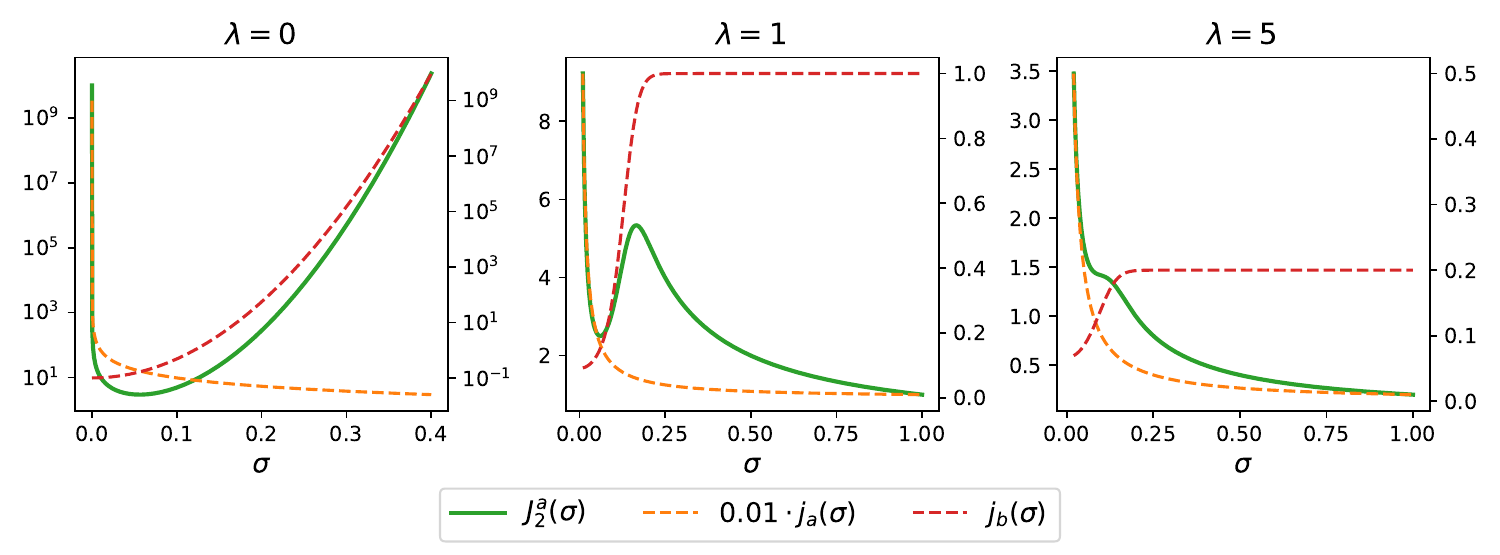}
\caption{Approximate Jacobian norm, $J^a_2$, (left y-axis), and its two factors, $j_a$ and $j_b$, (right y-axis), as defined in Equations \ref{eq:approx_jac} and \ref{eq:jajb}, as functions of the bandwidth for three different values of $\lambda$, and $n=10$, $l_{\max}=1$, $p=1$. Note that the scales of the axes differ between the three panels and that $j_a(\sigma)$ is scaled down by a factor of 100. We clearly see the global and local minima stated in Proposition \ref{thm:best_sigma}. For $\lambda=0$, $J^a_2(0)=J^a_2(\infty)=+\infty$ with a global minimum at $\sigma_0$. For $\lambda>0$, $J^a_2(0)=+\infty$ and $J^a_2(\infty)=0$. For $\lambda\leq2ne^{-3/2}$, $J^a_2$ has a local minimum at $\sigma_0$. $j_a(\sigma)$ decreases monotonically to 0, while $j_b(\sigma)$ increases monotonically to $1/\lambda$.}
\label{fig:appr_jac_demo}
\end{figure}

\begin{thm}~\\
\label{thm:best_sigma}
Let $J^a_2(\sigma)$ be defined according to Definition \ref{dfn:approx_jac}, and let, for $k\in\{-1,0\}$, 
\begin{equation}
\label{eq:best_sigma}
\sigma_k:=\frac{\sqrt{2}}{\pi}\frac{l_{\max}}{(n-1)^{1/p}-1} \sqrt{1-2W_k\left(-\frac{\lambda \sqrt{e}} {2n}\right)},
\end{equation}
where $W_k$ denotes the $k$-th branch of the Lambert $W$ function.
Then
\begin{itemize}
\item
For $\lambda=0$, $J^a_2(0)=J^a_2(\infty)=+\infty$, and $J^a_2(\sigma_0)=J^a_2\left(\frac{\sqrt{2}}{\pi}\frac{l_{\max}}{(n-1)^{1/p}-1}\right)$ is a global minimum.
\item
For $0<\lambda\leq2ne^{-3/2}$, $J^a_2(0)=+\infty$, and $J^a_2(\infty)=0$, with a local minimum $J^a_2(\sigma_0)$ and a local maximum $J^a_2(\sigma_{-1})$.
\item
For $\lambda>2ne^{-3/2}$, neither $\sigma_0$ nor $\sigma_{-1}$ is defined and $J^a_2(\sigma)$ decreases monotonically from $J^a_2(0)=+\infty$ to $J^a_2(\infty)=0$.
\end{itemize}
\end{thm}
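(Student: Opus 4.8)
The plan is to work with the reciprocal $g(\sigma) := 1/J_2^a(\sigma) = \sigma\left(n e^{-a^2\sigma^2} + \lambda\right)$, where I abbreviate $a := \frac{((n-1)^{1/p}-1)\pi}{2 l_{\max}}$ and drop the $\sigma$-independent constant $C$. Since both $g$ and $J_2^a$ are strictly positive on $(0,\infty)$, local minima of $J_2^a$ are exactly local maxima of $g$ and vice versa, so it suffices to study $g$. The boundary values follow immediately: as $\sigma\to 0^+$ we have $g\to 0$, hence $J_2^a\to +\infty$ for every $\lambda\geq 0$; as $\sigma\to\infty$, the exponential kills the first term, so $g\to 0$ (giving $J_2^a\to+\infty$) when $\lambda=0$, whereas $g\sim\lambda\sigma\to\infty$ (giving $J_2^a\to 0$) when $\lambda>0$. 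This already separates the $\lambda=0$ case from $\lambda>0$.

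Next I would locate the critical points by differentiating, $g'(\sigma) = n e^{-a^2\sigma^2}(1 - 2a^2\sigma^2) + \lambda$, and substituting $t := a^2\sigma^2$ to obtain $g'=h(t)+\lambda$ with $h(t):=n e^{-t}(1-2t)$. Setting $g'=0$ gives $n e^{-t}(2t-1)=\lambda$; the substitution $w:=\tfrac12-t$ turns this into $w e^{w} = -\frac{\lambda\sqrt e}{2n}$, whose solutions are $w = W_k\!\left(-\frac{\lambda\sqrt e}{2n}\right)$ for the two real branches $k\in\{0,-1\}$. Unwinding $t=\tfrac12-W_k$ and $\sigma=\sqrt{t}/a$ reproduces exactly Equation~\ref{eq:best_sigma}, and the requirement that the Lambert argument lie in $[-1/e,0)$ for a real solution yields the threshold $\lambda\leq 2n e^{-3/2}$.

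To classify the critical points I would study $h$ directly: $h(0)=n$, $h(t)\to 0^-$ as $t\to\infty$, and $h'(t)=n e^{-t}(2t-3)$ shows $h$ has a unique minimum at $t=3/2$ with value $h(3/2)=-2n e^{-3/2}$. Thus the horizontal line at height $-\lambda$ meets the graph of $h$ in: one point ($t=1/2$) when $\lambda=0$; two points $t_0\in(\tfrac12,\tfrac32)$ and $t_{-1}\in(\tfrac32,\infty)$ when $0<\lambda<2n e^{-3/2}$; a single tangency at $t=3/2$ when $\lambda=2n e^{-3/2}$; and none when $\lambda>2n e^{-3/2}$. Reading off the sign of $g'=h+\lambda$ on either side of each root then gives the monotonicity of $g$, and inverting: for $\lambda=0$, $g$ rises then falls with a single interior maximum, so $J_2^a$ has the claimed global minimum at $\sigma_0$; for $0<\lambda\leq 2n e^{-3/2}$, $g$ increases--decreases--increases, so $J_2^a$ has a local minimum at $\sigma_0$ and a local maximum at $\sigma_{-1}$; for $\lambda>2n e^{-3/2}$, $g'>0$ throughout, so $J_2^a$ decreases monotonically.

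The main obstacle I anticipate is the branch bookkeeping: I must confirm that the principal branch $W_0$ (taking values in $[-1,0]$ on the relevant argument range) corresponds to $t_0\in(\tfrac12,\tfrac32)$, i.e.\ to the decreasing arm of $h$ where $g$ attains a local maximum and hence $J_2^a$ a local minimum, while $W_{-1}$ (values $\leq -1$) corresponds to $t_{-1}>\tfrac32$ on the increasing arm, giving the local maximum of $J_2^a$. Getting the ranges of $W_0$ and $W_{-1}$ aligned with the position of the minimizer $t=3/2$ of $h$ --- and checking the coincident case $\lambda=2n e^{-3/2}$ where both collapse to $t=3/2$ --- is the one place where a careless sign or branch choice would swap the roles of minimum and maximum.
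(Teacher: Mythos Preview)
Your proposal is correct and follows essentially the same route as the paper: compute boundary limits, differentiate, reduce the stationarity equation to $we^w=-\lambda\sqrt{e}/(2n)$ via the substitution $w=\tfrac12-(\sigma/d)^2$, and invoke the two real branches of the Lambert $W$ function together with the domain constraint $-\lambda\sqrt{e}/(2n)\geq -1/e$ to obtain the threshold $\lambda\leq 2ne^{-3/2}$. The only cosmetic differences are that you work with the reciprocal $g=1/J_2^a$ (which cleans up the derivative) and that you classify the critical points by a full study of the auxiliary function $h(t)=ne^{-t}(1-2t)$ and its minimum at $t=3/2$, whereas the paper simply differentiates $J_2^a$ directly and infers which stationary point is the minimum from the ordering $W_{-1}<W_0$ together with the endpoint limits; your version is slightly more explicit but not a different argument.
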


\begin{proof}[Sketch of Proof]
We evaluate $J^a_2(0)$, $J^a_2(\infty)$ and points where $\frac{\partial J^a_2(\sigma)}{\partial \sigma}=0$ to obtain extreme point candidates. To avoid infeasible solutions, we have to consider the domain of the Lambert W function. For the full proof, see Appendix \ref{sec:proofs}.
\end{proof}

Based on Proposition \ref{thm:best_sigma} we can now propose a bandwidth selection scheme based on Jacobian control: 
For $\lambda\leq2ne^{-3/2}$, we choose the (possibly local) minimum $\sigma_0$ as our Jacobian based bandwidth. For $\lambda>2ne^{-3/2}$, $\sigma_0$ is not defined; in this case we choose our bandwidth as if $\lambda=2ne^{-3/2}$. Note that $\sigma_0$ is quite stable to changes in $\lambda$: The square root expression in Equation \ref{eq:best_sigma} increases from $1$ for $\lambda=0$ to $\sqrt{3}$ for $\lambda=2ne^{-3/2}$. This stability in terms of $\lambda$ can be seen in Figures \ref{fig:appr_jac_demo} and \ref{fig:bw_sim_demo}.

\subsection{Theoretical Details}
\label{sec:jac_just}
In this section, we present the calculations behind Definition \ref{dfn:approx_jac}. We also illuminate how bandwidth selection is a trade-off between a well-conditioned training data kernel matrix and a slow decay of the inferred function toward the default value.

We first use Proposition \ref{thm:dydx} to approximate the norm of the two kernel matrices in Equation \ref{eq:krr_s} with a product of two matrix norms. We then use Propositions \ref{thm:max_der_gauss} and \ref{thm:min_ki_gauss} to estimate these two norms for the case of the Gaussian kernel. Note that Proposition \ref{thm:dydx} holds for any kernel, not only the Gaussian.
\begin{thm}~\\
\label{thm:dydx}
Let $\dvi:=\xsv-\xvi$ where $\xvi$ is a row in $\X$.
Then, with $\fh(\xsv)$ according to Equation \ref{eq:krr_s}, for any function $k(\xv,\xpv)$,
\begin{equation}
\label{eq:dydx}
\begin{aligned}
\left\|\frac{\partial \fh(\xsv)}{\partial \xsv}\right\|_2&=\left\|\frac{\partial \fh(\xsv)}{\partial \dvi}\right\|_2\\
&\leq\max_{\xvi\in\X}\left\|\frac{\partial k(\xsv,\xvi)}{\partial \xsv}\right\|_1
\cdot\left\|\left(\K(\X,\X)+\lambda\I\right)^{-1}\right\|_2\cdot\sqrt{n}\left\|\yv\right\|_2\\
&=\max_{\xvi\in\X}\left\|\frac{\partial k(\dvi+\xvi,\xvi)}{\partial \dvi}\right\|_1\cdot\left\|\left(\K(\X,\X)+\lambda\I\right)^{-1}\right\|_2\cdot\sqrt{n}\left\|\yv\right\|_2,\\
\end{aligned}
\end{equation}
where the matrix norms are the induced operator norms.
\end{thm}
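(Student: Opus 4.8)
The plan is to reduce the gradient norm to a product of three scalar factors by peeling off one matrix at a time. First I would establish the equality $\left\|\frac{\partial\fh(\xsv)}{\partial\xsv}\right\|_2=\left\|\frac{\partial\fh(\xsv)}{\partial\dvi}\right\|_2$. Since $\dvi=\xsv-\xvi$ and $\xvi$ is a fixed training point, the map $\xsv\mapsto\dvi$ has Jacobian $\frac{\partial\dvi}{\partial\xsv}=\I$, so the chain rule gives $\frac{\partial\fh(\xsv)}{\partial\xsv}=\frac{\partial\fh(\xsv)}{\partial\dvi}$ as vectors, and in particular their $\ell_2$ norms agree. This also justifies the last line of the statement, which is merely the substitution $\xsv=\dvi+\xvi$.

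For the bound, write $\betahv:=\left(\K(\X,\X)+\lambda\I\right)^{-1}\yv\in\R^n$, which does not depend on $\xsv$. Then $\fh(\xsv)=\K(\xsv,\X)\,\betahv$, and differentiating the only $\xsv$-dependent factor yields $\frac{\partial\fh(\xsv)}{\partial\xsv}=M\betahv$, where $M\in\R^{p\times n}$ is the matrix whose $i$-th column is $\frac{\partial k(\xsv,\xvi)}{\partial\xsv}\in\R^p$. Writing this as a linear combination of the columns, $M\betahv=\sum_{i=1}^n\betahv_i\,\frac{\partial k(\xsv,\xvi)}{\partial\xsv}$, the triangle inequality together with the bounds $\|\cdot\|_2\le\|\cdot\|_1$ applied to each column and $\|\betahv\|_1\le\sqrt{n}\,\|\betahv\|_2$ (Cauchy--Schwarz) gives
\[
\left\|\frac{\partial\fh(\xsv)}{\partial\xsv}\right\|_2\le\max_{\xvi\in\X}\left\|\frac{\partial k(\xsv,\xvi)}{\partial\xsv}\right\|_1\cdot\sqrt{n}\,\|\betahv\|_2.
\]

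Finally I would peel off the inverse matrix using submultiplicativity of the spectral norm, $\|\betahv\|_2=\left\|\left(\K(\X,\X)+\lambda\I\right)^{-1}\yv\right\|_2\le\left\|\left(\K(\X,\X)+\lambda\I\right)^{-1}\right\|_2\|\yv\|_2$, and substitute to recover exactly Equation \ref{eq:dydx}. I do not expect a genuine analytic obstacle here; the argument is a chain of elementary norm inequalities. The only substantive decision is which relaxations to apply, and these are chosen deliberately: replacing the column $\ell_2$ norm by the $\ell_1$ norm and paying the factor $\sqrt{n}$ trades a tight but unwieldy bound for one whose three factors decouple cleanly, so that the kernel-derivative term and the matrix-conditioning term can subsequently be estimated separately for the Gaussian kernel in Propositions \ref{thm:max_der_gauss} and \ref{thm:min_ki_gauss}. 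The care required is thus purely in the bookkeeping of the norms and the $\max$ over $\xvi$, rather than in any difficulty of the estimate itself.
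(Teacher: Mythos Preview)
Your proof is correct and follows essentially the same approach as the paper: the chain rule with $\frac{\partial\dvi}{\partial\xsv}=\I$ for the equality, then peeling off factors via submultiplicativity and norm equivalence for the bound. The one cosmetic difference is where the factor $\sqrt{n}$ enters: the paper first applies submultiplicativity to get $\left\|\frac{\partial\K(\xsv,\X)}{\partial\xsv}\right\|_2\left\|(\K+\lambda\I)^{-1}\right\|_2\|\yv\|_2$ and then invokes the matrix-norm inequality $\|M\|_2\le\sqrt{n}\,\|M\|_1$ (recognizing the induced $1$-norm as the maximum column $\ell_1$-sum), whereas you expand $M\betahv$ columnwise, bound each column in $\ell_1$, and then use $\|\betahv\|_1\le\sqrt{n}\,\|\betahv\|_2$. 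Both routes are standard and land on the identical bound.
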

\begin{proof}[Sketch of Proof]
We first use submultiplicativity to split the norm into three factors, and equivalence of matrix norms to replace the 2-norm with the 1-norm in the first factor. Finally, we show that $\frac{\partial \fh(\xsv)}{\partial \xsv}=\frac{\partial \fh(\xsv)}{\partial \dvi}$ to obtain the forms for the two factors. For the full proof, see Appendix \ref{sec:proofs}.
\end{proof}

\begin{thm}~\\
\label{thm:max_der_gauss}
Let $\dvi:=\xsv-\xvi$ where $\xvi$ is a row in $\X$, and denote $d_i:=\|\dvi\|_2$. Then, for the Gaussian kernel,\\ $k_G(\dvi,\sigma)=\exp\left(-\frac{\|\dvi\|_2^2}{2\sigma^2}\right)$,
\begin{equation}
\label{eq:max_der_gauss}
\begin{aligned}
\max_{\xvi\in\X}\left\|\frac{\partial k_G(\dvi,\sigma)}{\partial \dvi}\right\|_1=\max_{\xvi\in\X}\frac{d_i}{\sigma^2}\exp\left(-\frac{d_i^2}{2\sigma^2}\right)
\leq\frac1{\sigma\sqrt{e}}=:\frac{1}{\sqrt{e}}\cdot j_a(\sigma).
\end{aligned}
\end{equation}
\end{thm}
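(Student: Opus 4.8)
The plan is to compute the gradient of the Gaussian kernel with respect to $\dvi$ in closed form, reduce the resulting norm to a single-variable function of the distance $d_i=\|\dvi\|_2$, and then maximize that function over $d_i\geq 0$. First I would apply the chain rule to $k_G(\dvi,\sigma)=\exp(-\|\dvi\|_2^2/(2\sigma^2))$; since $\frac{\partial}{\partial \dvi}\|\dvi\|_2^2=2\dvi$, this gives
\begin{equation*}
\frac{\partial k_G(\dvi,\sigma)}{\partial \dvi}=-\frac{\dvi}{\sigma^2}\exp\left(-\frac{\|\dvi\|_2^2}{2\sigma^2}\right),
\end{equation*}
a vector parallel to $\dvi$ whose Euclidean magnitude is exactly $\frac{d_i}{\sigma^2}\exp(-d_i^2/(2\sigma^2))$. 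This yields the first equality in Equation \ref{eq:max_der_gauss}: in the scalar feature case the $1$-norm coincides with this magnitude, and the scalar factor $\frac{d_i}{\sigma^2}\exp(-d_i^2/(2\sigma^2))$ is what governs how the gradient norm scales with distance.

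The main analytical step is to maximize $g(d):=\frac{d}{\sigma^2}\exp(-d^2/(2\sigma^2))$ over $d\geq 0$, treating the distance as a free variable. Differentiating gives $g'(d)=\frac{1}{\sigma^2}\exp(-d^2/(2\sigma^2))\left(1-\frac{d^2}{\sigma^2}\right)$, whose only positive root is $d=\sigma$. Since $g(0)=0$, $g(d)\to 0$ as $d\to\infty$, and $g$ is nonnegative with this single interior critical point, $d=\sigma$ is the global maximizer; substituting gives $g(\sigma)=\frac{1}{\sigma}e^{-1/2}=\frac{1}{\sigma\sqrt{e}}$.

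Finally, the training-data distances $\{d_i\}_{i=1}^n$ form a finite subset of $[0,\infty)$, so the maximum over $\xvi\in\X$ cannot exceed the unconstrained maximum, giving $\max_i g(d_i)\leq\frac{1}{\sigma\sqrt{e}}=\frac{1}{\sqrt{e}}j_a(\sigma)$, the claimed inequality. I expect the only delicate point to be the first step, namely justifying the passage from the vector gradient to the scalar expression in $d_i$—and, when $p>1$, the relationship between the vector $1$-norm and the Euclidean magnitude—whereas the one-dimensional optimization is elementary calculus with a single interior critical point.
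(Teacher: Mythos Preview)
Your approach is essentially the same as the paper's: compute the gradient, reduce to a one-variable function of $d_i$, and maximize via elementary calculus (critical point at $d_i=\sigma$, boundary values zero). The only substantive difference is precisely the point you flagged as delicate---how to pass from the vector $\ell_1$-norm to the scalar expression when $p>1$. The paper handles this by working in spherical coordinates: it writes $\left\|\tfrac{\partial k_G}{\partial \dvi}\right\|_1=\bigl|\tfrac{\partial k_G}{\partial d_i}\bigr|+\sum_{j\geq 2}\bigl|\tfrac{1}{d_i}\tfrac{\partial k_G}{\partial\theta_j}\bigr|$ and then invokes rotational invariance of $k_G$ to kill all angular terms, leaving only the radial derivative $\bigl|\partial_{d_i}k_G\bigr|=\tfrac{d_i}{\sigma^2}e^{-d_i^2/(2\sigma^2)}$. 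Your Cartesian computation gives the same scalar as the \emph{Euclidean} magnitude of the gradient, so the optimization step and the final bound are identical; the spherical-coordinate device is just the paper's way of justifying the first equality in the displayed statement for general $p$, rather than only for $p=1$.
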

\begin{proof}[Sketch of Proof]
Since the Gaussian kernel is rotationally invariant we only need to consider the radial coordinate, $d_i$, when calculating the gradient. The value of $d_i$ that maximizes the gradient is calculated by setting the derivative (of the gradient) to zero. For the full proof, see Appendix \ref{sec:proofs}.
\end{proof}

\begin{thm}~\\
\label{thm:min_ki_gauss}
For $\K(\X,\X,\sigma)_{ij}=k_G(\xvi,\xvj,\sigma)$, where $k_G(\xv,\xpv,\sigma)$ denotes the Gaussian kernel,
\begin{equation}
\label{eq:min_ki_gauss}
\begin{aligned}
\left\|\left(\K(\X,\X,\sigma)+\lambda\I\right)^{-1}\right\|_2
\geq \frac1{n\cdot\exp\left(-\left(\frac{((n-1)^{1/p}-1)\pi\sigma}{2l_{\max}}\right)^2\right)+\lambda}=:j_b(\sigma).
\end{aligned}
\end{equation}
\end{thm}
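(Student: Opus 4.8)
My plan is to reduce the matrix-norm claim to a one-sided bound on the smallest eigenvalue of the Gaussian Gram matrix, and then to bound that eigenvalue from above using a single, well-chosen test vector. Since the Gaussian kernel is symmetric and positive semi-definite, $\K(\X,\X,\sigma)+\lambda\I$ is symmetric and (for $\lambda>0$, or for distinct observations) positive definite, so the spectral norm of its inverse is controlled exactly by its smallest eigenvalue: $\left\|(\K+\lambda\I)^{-1}\right\|_2=1/(\mu_{\min}(\K)+\lambda)$, where $\mu_{\min}(\K)$ denotes the smallest eigenvalue of $\K$. Consequently the stated inequality $\left\|(\K+\lambda\I)^{-1}\right\|_2\ge j_b(\sigma)$ is \emph{equivalent} to the eigenvalue bound $\mu_{\min}(\K)\le n\exp\left(-\left(\frac{((n-1)^{1/p}-1)\pi\sigma}{2l_{\max}}\right)^2\right)$, and this is what I would actually prove.

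To bound $\mu_{\min}(\K)$ from above I would use the variational (Rayleigh quotient) characterization $\mu_{\min}(\K)\le (w^\top\K w)/(w^\top w)$, which holds for every nonzero $w$, so any single good choice of $w$ suffices. The right choice is the highest-frequency mode the data can support: viewing the $n$ observations as approximately a regular $p$-dimensional grid of diameter $l_{\max}$ with on the order of $(n-1)^{1/p}$ points per side, I would take $w$ to be the Nyquist oscillation, e.g.\ $w_i=e^{\mathrm{i}\xi^\top\xvi}$ for a frequency $\xi$ whose magnitude is comparable to $\pi$ divided by the grid spacing. The intuition is exactly the one the paper stresses: a Gaussian kernel is smooth and acts as a low-pass filter, so its most rapidly oscillating eigen-mode carries the least energy, i.e.\ the smallest eigenvalue.

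Evaluating the quadratic form $w^*\K w=\sum_{i,j}e^{-\mathrm{i}\xi^\top(\xvi-\xvj)}\exp(-\|\xvi-\xvj\|_2^2/(2\sigma^2))$, the grid's translation structure turns the double sum into roughly $n$ copies of an inner sum that is precisely the discrete Fourier symbol of the Gaussian at frequency $\xi$. Since the continuous Fourier transform of $\exp(-\|\dv\|_2^2/(2\sigma^2))$ is itself Gaussian, proportional to $\exp(-\sigma^2\|\xi\|_2^2/2)$, a Poisson-summation estimate converts the inner sum into the exponential factor in the claimed bound; dividing by $\|w\|_2^2$ while keeping the $n$ points accounts for the remaining prefactor. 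Matching $\|\xi\|_2$ to the Nyquist frequency of a grid of diameter $l_{\max}$ is what produces the precise constant $((n-1)^{1/p}-1)\pi/(2l_{\max})$ inside the exponential.

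I expect the main obstacle to be turning this Fourier heuristic into a clean inequality with exactly these constants. Two things need care: first, the regular-grid idealization that pins down the relationship between $l_{\max}$, $(n-1)^{1/p}$ and the Nyquist frequency (real data are not on a grid, so this is a modeling step, consistent with the ``approximate'' framing of Definition \ref{dfn:approx_jac}); and second, the passage from the infinite-grid Fourier symbol to the finite double sum, where the triangular edge weights and the Poisson-summation tail must be controlled in the correct direction so that the estimate remains a genuine upper bound on $w^*\K w$. A cheap two-point test vector such as $w=\bm{e_i}-\bm{e_j}$ only yields $\mu_{\min}(\K)\le 1-\exp(-\|\xvi-\xvj\|_2^2/(2\sigma^2))$, which has the wrong functional form in $\sigma$; this is precisely why the oscillatory, geometry-aware choice of $w$ is essential.
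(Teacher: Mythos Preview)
Your reduction in the first paragraph is exactly the paper's: since $\K$ is symmetric positive semi-definite, $\left\|(\K+\lambda\I)^{-1}\right\|_2=1/(s_n(\K)+\lambda)$, so the claim is equivalent to $s_n(\K)\le n\exp\bigl(-\bigl(\tfrac{((n-1)^{1/p}-1)\pi\sigma}{2l_{\max}}\bigr)^2\bigr)$.

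From that point on the paper takes a different route. It does not construct a test vector at all; instead it imports as a black box an effective-rank bound of \citet{bermanis2013multiscale}: for a Gaussian kernel matrix built from points in a box with side lengths $l_i$, the number of singular values exceeding $\delta\cdot s_1$ satisfies $R_\delta(\K)\le\prod_{i=1}^p\bigl(\tfrac{2}{\pi}\tfrac{l_i}{\sigma}\sqrt{\log(1/\delta)}+1\bigr)$. Choosing $\delta$ so that the right-hand side equals $n-1$ forces $R_\delta(\K)\le n-1$, hence $s_n<\delta\,s_1$; then $s_1\le n\|\K\|_{\max}=n$ gives the stated bound. The precise constant $((n-1)^{1/p}-1)\pi/(2l_{\max})$ drops out of this inversion, not from a Nyquist argument.

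The comparison is instructive. The cited result holds for \emph{arbitrary} point configurations in a box, so the paper's proof is rigorous without any grid idealisation. Your Rayleigh-quotient approach is more self-contained and captures the right mechanism (the Gaussian kernel kills high-frequency modes), but, as you correctly flag, it only delivers the exact constants under a regular-grid assumption; for generic $\X$ the Poisson-summation step and the identification of the Nyquist frequency with $((n-1)^{1/p}-1)\pi/(2l_{\max})$ are heuristic, so the inequality as stated would not follow. In short, the obstacles you anticipate are real, and the paper sidesteps them by citation rather than by the direct Fourier computation you propose. If you want to make your route rigorous at this level of generality, you would essentially be re-proving the Bermanis--Averbuch--Coifman bound.
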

\begin{proof}[Sketch of Proof]
\citet{bermanis2013multiscale} provide an estimate for the number of singular values larger than $\delta\cdot s_1$ for a Gaussian kernel matrix, where $\delta>0$ and $s_1$ denotes the largest singular value. Using this expression, we can upper bound the smallest singular value, or, equivalently, lower bound the largest singular value of the inverse matrix. The addition of $\lambda\I$ shifts all singular values by $\lambda$. For the full proof, see Appendix \ref{sec:proofs}.
\end{proof}

In the absence of regularization, $j_b(\sigma)$ is a bound of the spectral norm of the inverse training data kernel matrix. With increasing $\sigma$, the elements in $\K(\X,\X,\sigma)$ become increasingly similar, and $\K(\X,\X,\sigma)$ becomes closer to singular, which results in an ill-conditioned solution, where $\fh(\xsv)$ is very sensitive to perturbations in $\X$. Introducing regularization controls the conditioning, as seen in Figure \ref{fig:appr_jac_demo}; $j_b(\sigma)$ is upper bounded by $1/\lambda$. The poor generalization properties of regression with an ill-conditioned kernel matrix are well known, see e.g. \citet{poggio2019double}, \citet{amini2021spectrally}, or \citet{hastie2022surprises}.

With the Jacobian approach, the contribution of an ill-conditioned matrix, $j_b(\sigma)$, is balanced by how quickly the inferred function decays in the absence of training data, $j_a(\sigma)$. For a too small bandwidth, the inferred function quickly decays to zero; $j_a(\sigma)$ is large and thus the derivatives of the inferred function. For a too large bandwidth, $\K(\X,\X,\sigma)$ is almost singular, which results in extreme predictions; $j_b(\sigma)$ is large, and thus the derivatives of the inferred function. By controlling the Jacobian, both poor generalization due to predicting mostly zero and poor generalization due to extreme predictions is avoided.

\subsection{Outlier Sensitivity}
\label{sec:outliers}
Since $l_{\max}$ might be sensitive to outliers, Equation \ref{eq:best_sigma} suggests that so might the Jacobian method. One option to mitigate this problem is to use a trimmed version of $l_{\max}$, calculated after removing outliers. Our approach is however based on the observation that for data evenly spread within a hypercube in $\R^p$ with side $l_{\max}$, $\frac{l_{\max}}{(n-1)^{1/p}-1}$ is exactly the distance from an observation to its closest neighbor(s). We thus define the Jacobian median method analogously to the Jacobian method but with $\frac{l_{\max}}{(n-1)^{1/p}-1}$ replaced by $\text{Med}_{i=1}^n\left(\min_{j\neq i}\|\xv_i-\xv_j\|_2\right)$, i.e. median of the closest-neighbor-distances.

\section{Experiments}
\label{sec:experiments}
Experiments were performed on one synthetic and seven real data sets. The metrics evaluated were test $R^2$, i.e.\ the proportion of the variation in the data that is explained by the model, the selected bandwidth $\sigma$, and bandwidth selection computation time in milliseconds, $t$. 
In all experiments, the Jacobian-based bandwidth was compared to those of generalized cross-validation (GCV) for kernel regression \citep{hastie2022surprises}, marginal likelihood maximization (MML), and Silverman's rule of thumb. 
Note, however, that Silverman's method was developed with KDE, rather than KRR, in mind and thus does not take $\lambda$ into account. We chose however to include it as a reference since, just like the Jacobian method, it is a computationally light, closed-form solution. To avoid singular matrices, a small regularization of $\lambda=10^{-3}$ was used in all experiments unless otherwise stated.
All experiments were run on a cluster with Intel Xeon Gold 6130 CPUs.

In Section \ref{sec:large_exps}, we perform experiments on five large data sets, focusing on performance, in terms of $R^2$ and computation time, while in Section \ref{sec:simple_exps}, we perform experiments on relatively simple data sets, to demonstrate the similarities and differences between the four bandwidth selection methods in greater detail.

\subsection{Large Real Data}
\label{sec:large_exps}
In this section, we compare the bandwidth selection methods on the five real data sets described in Table \ref{tab:big_data}. The data sets were selected to compare the algorithms on a diverse set of applications, although they all have in common that they are relatively large in terms of number of observations and dimensions.

\begin{table}
\caption{Real data sets used for comparing the four bandwidth selection methods.}
\centering
\begin{tabular}{|l|l|}
\hline
Data Set & Size, $n\times p$\\
\hline
\hline
Quality of aspen tree fibres\tablefootnote{The data set is available at \url{https://openmv.net/info/wood-fibres}.} & $25165\times 5$\\
\hline
\makecell[l]{Appliances energy use in a low energy building in\\Stambruges, Belgium \citep{candanedo2017data}\tablefootnote{The data set is available at \url{https://github.com/LuisM78/Appliances-energy-prediction-data}.}} & $19735\times 27$\\
\hline
House values in California \citep{pace1997sparse}\tablefootnote{The data set is available at \url{https://www.dcc.fc.up.pt/~ltorgo/Regression/cal_housing.html}.} & $20640\times  8$\\
\hline
\makecell[l]{Protein structure as root-mean-square deviation\\ of atomic positions, taken from CASP\tablefootnote{\url{https://predictioncenter.org/}, the data set is available at \url{https://archive.ics.uci.edu/ml/datasets/Physicochemical+Properties+of+Protein+Tertiary+Structure}.}} & $45730\times 9$\\
\hline
\makecell[l]{Daily concentration of black smoke particles in the U.K.\\ in the year 2000 \citep{wood2017generalized}\tablefootnote{The data set is available at \url{https://www.maths.ed.ac.uk/~swood34}.}} & $45568\times 10$\\
\hline
\end{tabular}
\label{tab:big_data}
\end{table}

For each data set, 100 random splits were created by selecting 10000 of the observations at random. For each split, the data was standardized and split randomly into 65 \% training and 45 \% testing data. For cross-validation, 10 logarithmically spaced values between $0.001$ and $l_{\max}$ were used. 

In Table \ref{tab:big_res}, we present the means together with the first and ninth deciles for $R^2$, and $t$ across the 100 splits for each method, together with the results of Wilcoxon signed rank tests, testing whether the Jacobian method performs better (in terms of explained variance) and faster (in terms of computation time) than the competing methods. On the evaluated data, the Jacobian method is up to a million times faster than GCV and MML, while generally performing on par, or better, in terms of $R^2$. 
Compared to MML, the Jacobian method performs significantly better on four of the five data sets, while GCV tends to perform slightly better than the Jacobian method. Silverman's method also performs well in terms of computation time but performs significantly worse than the Jacobian method in terms of $R^2$.
\begin{table}
\centering
\caption{Mean together with first and ninth deciles (within parentheses) of explained variance, $R^2$, and bandwidth selection time in seconds, $t$, for the five data sets from Table \ref{tab:big_data}. In the second rows of the cells, we state the p-values of Wilcoxon signed rank tests, testing whether the Jacobian method performs better (in terms of explained variance) and/or faster (in terms of computation time) than the competing method.
In all cases, the Jacobian method is significantly faster than the competing methods, and, in most cases, it performs significantly better.}
\begin{tabular}{|l|l|l|l|}
\hline
\hline
 \multirow{7}{*}{\makecell{Aspen\\Fibres}} & Jacobian  & $0.65$ ($0.62$, $0.67$) & $0.00031$ ($0.00016$, $0.00043$)\\
\cline{2-4}
 & \multirow{2}{*}{GCV}  & $0.65$ ($0.63$, $0.67$) & $520$ ($350$, $690$)\\
 & & $p_{\text{Wil}}=0.59$& $p_{\text{Wil}}=\bm{1.9\cdot 10^{-18}}$\\
\cline{2-4}
 & \multirow{2}{*}{MML}  & $0.61$ ($0.59$, $0.64$) & $280$ ($240$, $330$)\\
 & & $p_{\text{Wil}}=\bm{1\cdot 10^{-17}}$& $p_{\text{Wil}}=\bm{1.9\cdot 10^{-18}}$\\
\cline{2-4}
 & \multirow{2}{*}{Silverman}  & $0.49$ ($0.45$, $0.53$) & $0.00045$ ($0.00029$, $0.00073$)\\
 & & $p_{\text{Wil}}=\bm{1.9\cdot 10^{-18}}$& $p_{\text{Wil}}=\bm{5.5\cdot 10^{-16}}$\\
\hline
 \multirow{7}{*}{\makecell{Appliances\\Energy Use}} & Jacobian  & $0.26$ ($0.24$, $0.28$) & $0.00036$ ($0.00026$, $0.00042$)\\
\cline{2-4}
 & \multirow{2}{*}{GCV}  & $0.27$ ($0.24$, $0.3$) & $200$ ($180$, $220$)\\
 & & $p_{\text{Wil}}=1$& $p_{\text{Wil}}=\bm{1.9\cdot 10^{-18}}$\\
\cline{2-4}
 & \multirow{2}{*}{MML}  & $0.27$ ($0.24$, $0.29$) & $240$ ($210$, $270$)\\
 & & $p_{\text{Wil}}=1$& $p_{\text{Wil}}=\bm{1.9\cdot 10^{-18}}$\\
\cline{2-4}
 & \multirow{2}{*}{Silverman}  & $0.18$ ($0.16$, $0.21$) & $0.0011$ ($0.00086$, $0.0017$)\\
 & & $p_{\text{Wil}}=\bm{1.9\cdot 10^{-18}}$& $p_{\text{Wil}}=\bm{2.9\cdot 10^{-17}}$\\
\hline
 \multirow{7}{*}{\makecell{California\\Housing}} & Jacobian  & $0.59$ ($0.29$, $0.71$) & $0.00034$ ($0.00017$, $0.00044$)\\
\cline{2-4}
 & \multirow{2}{*}{GCV}  & $0.7$ ($0.59$, $0.75$) & $590$ ($340$, $790$)\\
 & & $p_{\text{Wil}}=1$& $p_{\text{Wil}}=\bm{1.9\cdot 10^{-18}}$\\
\cline{2-4}
 & \multirow{2}{*}{MML}  & $0.38$ ($-0.045$, $0.65$) & $260$ ($240$, $280$)\\
 & & $p_{\text{Wil}}=\bm{8.8\cdot 10^{-8}}$& $p_{\text{Wil}}=\bm{1.9\cdot 10^{-18}}$\\
\cline{2-4}
 & \multirow{2}{*}{Silverman}  & $0.5$ ($0.45$, $0.55$) & $0.00054$ ($0.00039$, $0.00083$)\\
 & & $p_{\text{Wil}}=\bm{2.9\cdot 10^{-7}}$& $p_{\text{Wil}}=\bm{1.9\cdot 10^{-18}}$\\
\hline
 \multirow{7}{*}{\makecell{Protein\\Structure}} & Jacobian  & $0.37$ ($0.35$, $0.39$) & $0.00033$ ($0.00018$, $0.00042$)\\
\cline{2-4}
 & \multirow{2}{*}{GCV}  & $0.42$ ($0.39$, $0.44$) & $460$ ($400$, $510$)\\
 & & $p_{\text{Wil}}=1$& $p_{\text{Wil}}=\bm{1.9\cdot 10^{-18}}$\\
\cline{2-4}
 & \multirow{2}{*}{MML}  & $0.32$ ($0.3$, $0.33$) & $300$ ($240$, $370$)\\
 & & $p_{\text{Wil}}=\bm{1.9\cdot 10^{-18}}$& $p_{\text{Wil}}=\bm{1.9\cdot 10^{-18}}$\\
\cline{2-4}
 & \multirow{2}{*}{Silverman}  & $0.14$ ($0.086$, $0.19$) & $0.00066$ ($0.00043$, $0.0011$)\\
 & & $p_{\text{Wil}}=\bm{1.9\cdot 10^{-18}}$& $p_{\text{Wil}}=\bm{1.9\cdot 10^{-18}}$\\
\hline
 \multirow{7}{*}{\makecell{U.K.\ Black\\Smoke}} & Jacobian  & $0.31$ ($0.29$, $0.33$) & $0.00032$ ($0.00016$, $0.00041$)\\
\cline{2-4}
 & \multirow{2}{*}{GCV}  & $0.31$ ($0.3$, $0.33$) & $380$ ($330$, $420$)\\
 & & $p_{\text{Wil}}=1$& $p_{\text{Wil}}=\bm{1.9\cdot 10^{-18}}$\\
\cline{2-4}
 & \multirow{2}{*}{MML}  & $0.27$ ($0.26$, $0.29$) & $240$ ($220$, $270$)\\
 & & $p_{\text{Wil}}=\bm{1.9\cdot 10^{-18}}$& $p_{\text{Wil}}=\bm{1.9\cdot 10^{-18}}$\\
\cline{2-4}
 & \multirow{2}{*}{Silverman}  & $0.16$ ($0.12$, $0.19$) & $0.00056$ ($0.00041$, $0.0011$)\\
 & & $p_{\text{Wil}}=\bm{1.9\cdot 10^{-18}}$& $p_{\text{Wil}}=\bm{2.7\cdot 10^{-17}}$\\
\hline
\end{tabular}
\label{tab:big_res}
\end{table}

\subsection{Synthetic and Simpler Real Data}
\label{sec:simple_exps}
In this section, we compare the four methods on the following data:
\begin{itemize}
\item 2D Temperature Data: The temperatures at 40 different French weather stations at 3 a.m., Jan 1st, 2020.
\item 1D Temperature Data: The temperature at 248 different times in January 2020 at the weather station at Toulouse-Blagnac.
\item Cauchy Synthetic Data: For $n$ observations, for $i\in [1,n]$, $x_i\sim\text{Cauchy}(0,3)$, $y_i=\sin(2\pi x_i)+\varepsilon_i$, where $\varepsilon_i\sim\mathcal{N}(0,0.2^2)$.
\end{itemize}
The French temperature data was obtained from Météo France\footnote{The data set is available at \url{https://donneespubliques.meteofrance.fr/donnees_libres/Txt/Synop/postesSynop.csv}} and processed following the setup by \citet{vanwynsberge1999kriging}. For the cross-validation, 100 logarithmically spaced values between $0.001$ and $l_{\max}$ were used.

In Figure \ref{fig:bw_sim_demo}, we compare the approximate and true Jacobian norms as functions of the bandwidth. We see that the approximate norm captures the structure of the true norm quite well. In the absence of regularization, the minima of the two functions approximately agree. When regularization is added, the selected bandwidth, $\sigma_0$, is close to the elbow of both the approximate and true norms. 

\begin{figure}
\center
\includegraphics[width=\textwidth]{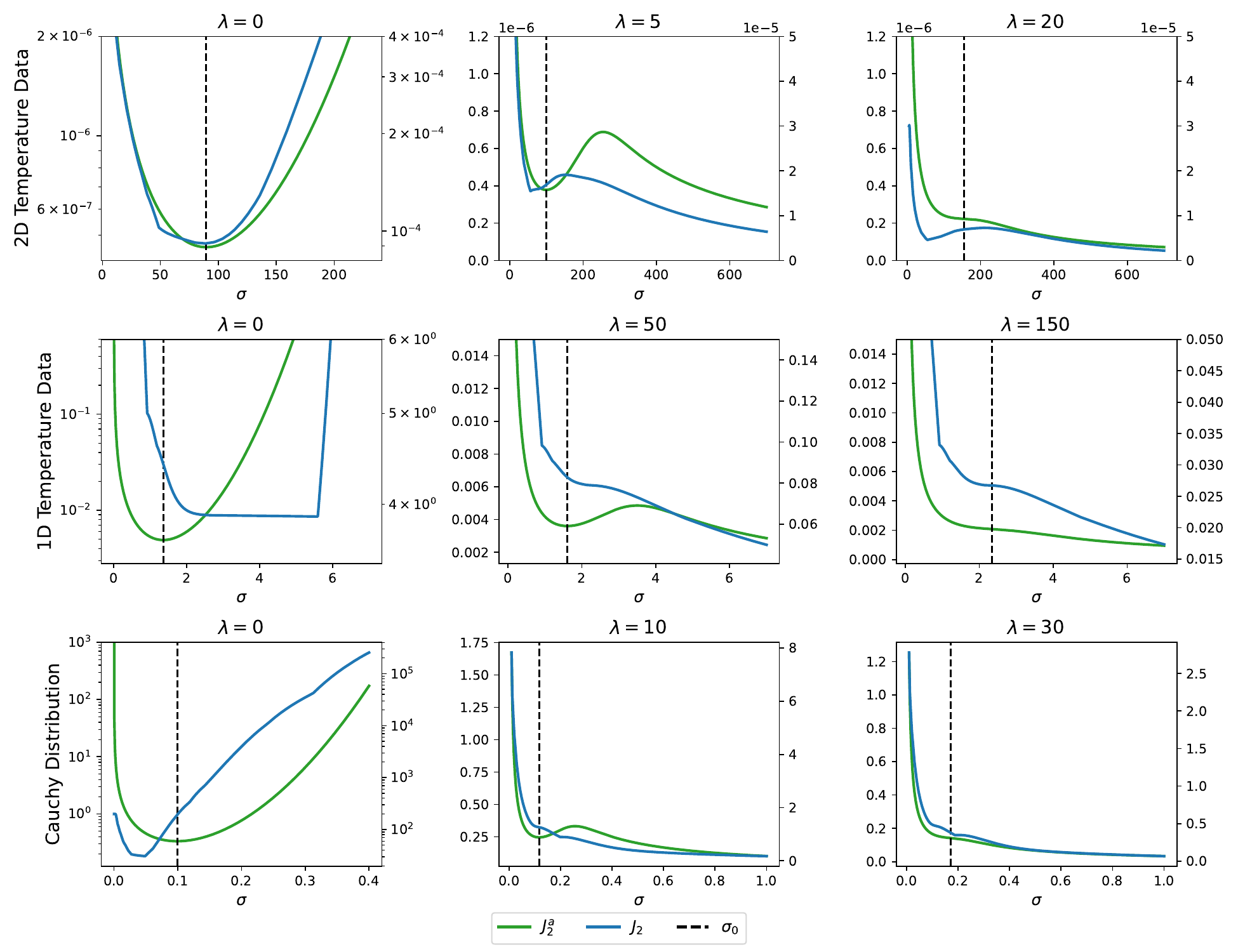}
\caption{Comparison of the approximate (green, left y-axis) and true (blue, right y-axis) Jacobian norms as a function of bandwidth and regularization. The approximate Jacobian norm captures the structure of the true Jacobian norm quite well, especially for the 2D temperature data, where for $\lambda=0$, the minima of the two functions agree very well. For $\lambda>0$, the selected bandwidth, $\sigma_0$, is close to the elbow of both the approximate and true norms. In the rightmost panel, $\lambda>2 n e^{-3/2}$, which means that the approximate Jacobian norm has no local minimum and $\sigma_0$ is selected as if $\lambda=2 n e^{-3/2}$.}
\label{fig:bw_sim_demo}
\end{figure}

In Figures \ref{fig:french_2d} and \ref{fig:french_1d}, we provide the results of jackknife resampling for the two temperature data sets. For each data set, the experiments were repeated $n$ times, omitting a different observation in each experiment. For the 1D temperature data, half of the observations were set aside as reference data. Apart from plotting the jackknife mean and standard deviations of the prediction, we also state the mean and standard deviations of the selected bandwidths. Compared to the other methods, the Jacobian method tends to provide less extreme predictions and to be more stable in terms of selected bandwidth across the experiments. For the 1D data, the predictions provided by MML and Silverman’s method are admittedly the least extreme, but at the expense of failing to capture the dynamics of the data.

\begin{figure}
\center
\includegraphics[width=\textwidth]{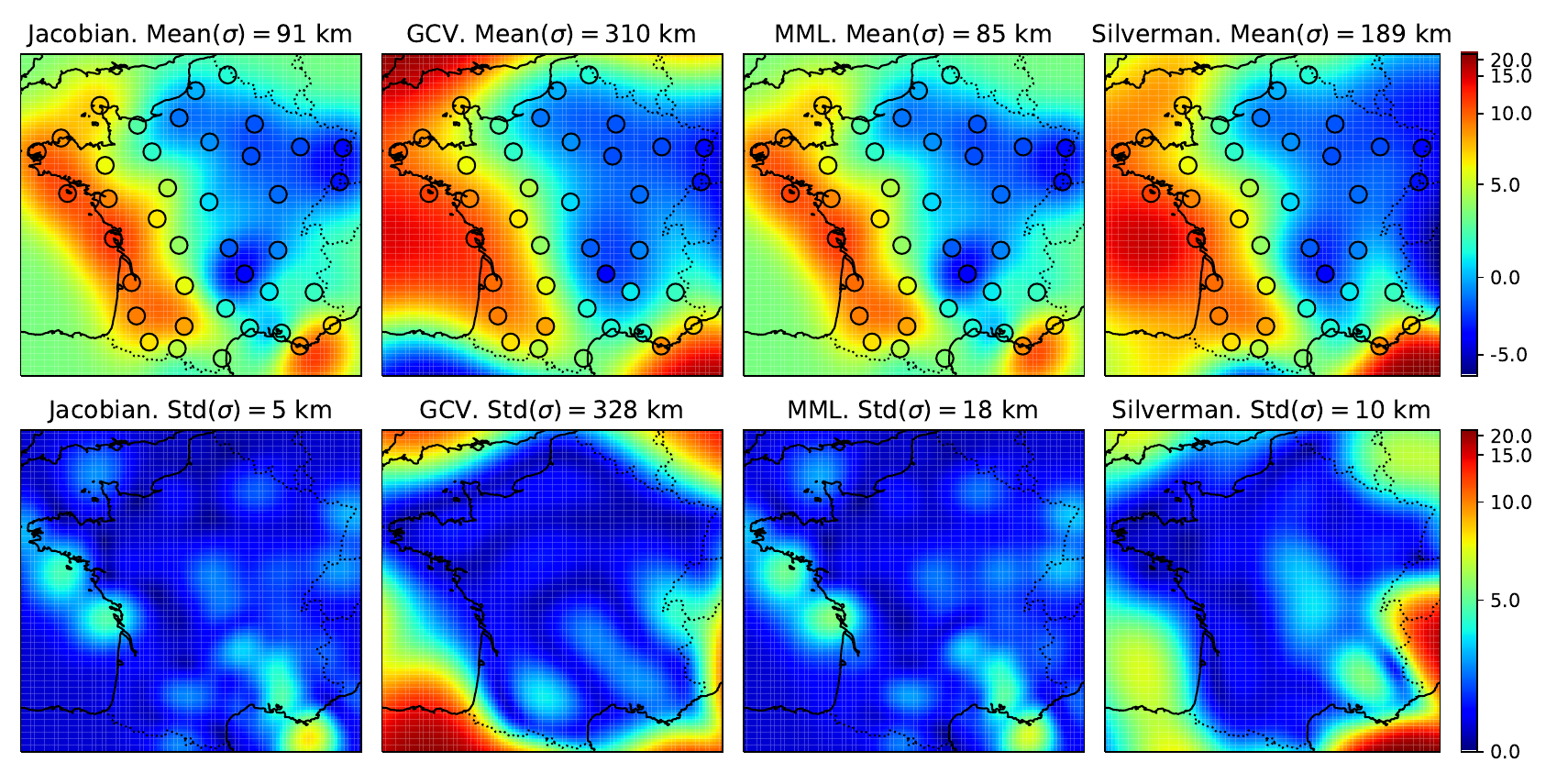}
\caption{Means (top row) and standard deviations (bottom row) of KRR temperature predictions in $^\circ C$ from jackknife resampling on the 2D temperature data. Note that the scales are not linear. 
The Jacobian and MML methods provide less extreme predictions than GCV and Sliverman's method does. They are also more stable in terms of bandwidth selection.}
\label{fig:french_2d}
\end{figure}

\begin{figure}
\center
\includegraphics[width=\textwidth]{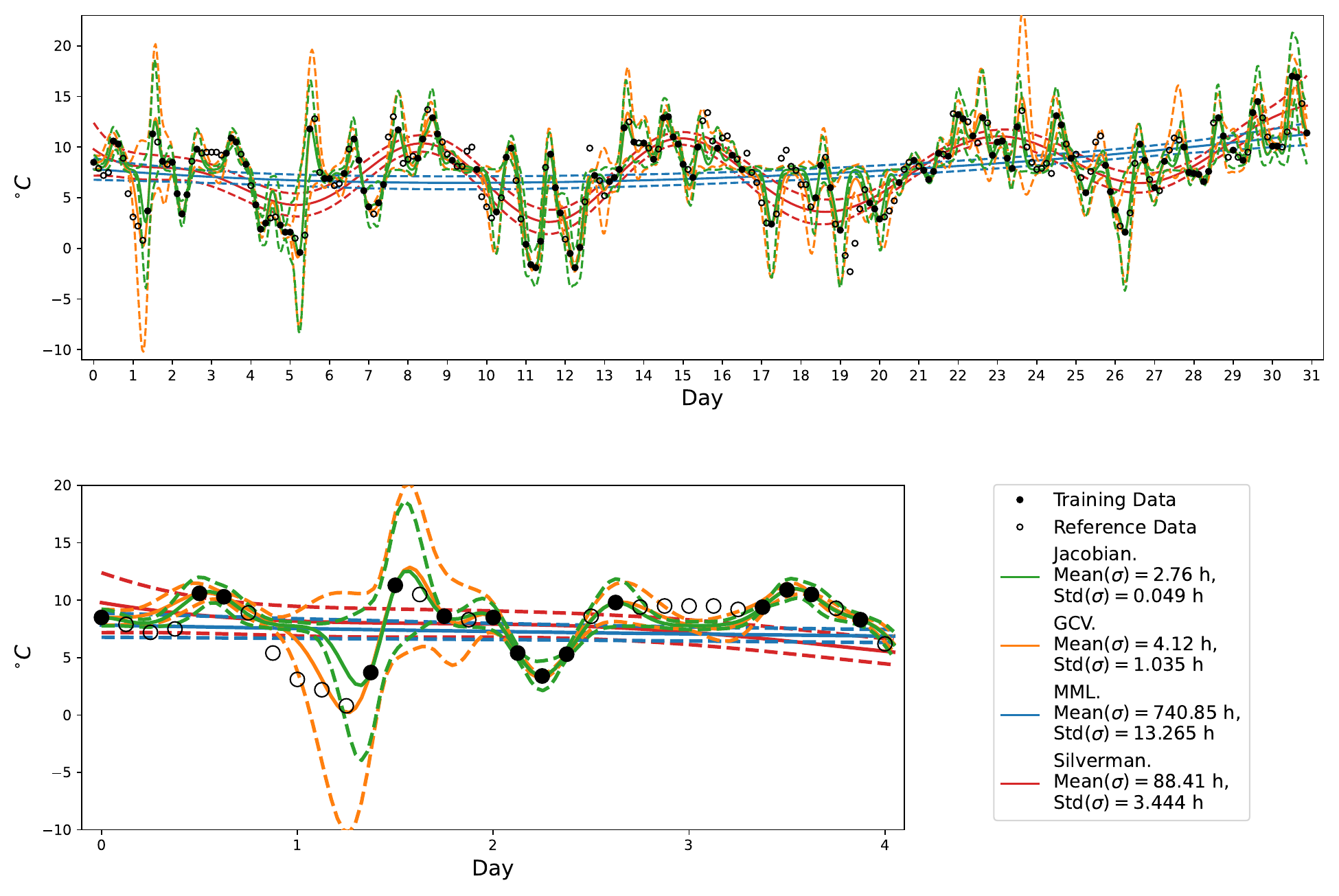}
\caption{Means and standard deviations of KRR predictions from jackknife resampling on the 1D temperature data. The lower bottom plot shows a zoom-in on the first 4 days. The Jacobian method performs similarly to cross-validation but provides slightly less extreme predictions. It is also more stable in terms of bandwidth selection. MML and Silverman's methods both underfit the data, which can be attributed to their much larger bandwidth.}
\label{fig:french_1d}
\end{figure}

\subsubsection{Varying Sample Size and Regularization}
In Figures \ref{fig:sweep_n} and \ref{fig:sweep_lbda}, we vary the sample size, $n$, and regularization strength, $\lambda$.
For the synthetic data, 1000 test observations were generated, while the real data was randomly split into training and testing data. When varying $n$, 15\ \% of the real data was saved for testing, resulting in splits of size  $n$/6, $n$/37, and $n$/1000 for the 2D temperature, 1D temperature, and synthetic data, respectively.
When varying $\lambda$, $n$ was chosen to a value where the different methods performed approximately equally well in the experiments with varying sample size (Figure \ref{fig:sweep_n}). Thus the splits when varying $\lambda$ were 25/15, 100/148, and 50/1000 for the 2D temperature, 1D temperature, and synthetic data, respectively.
In all cases 1000 random splits were used to estimate the variance of test $R^2$, the selected bandwidth $\sigma$, and the bandwidth selection computation time, $t$. 
It is again confirmed that the Jacobian method, in addition to being much faster than GCV and MML, is much more stable in terms of bandwidth selection. For the Cauchy distributed data, the median version of the Jacobian method was used; this method requires slightly more time than the standard Jacobian method. The reason for Silverman's method being slower than the Jacobian method for the 2D temperature data is due to its need to calculate the standard deviation of the data, and thus the distance to the mean from all observations. For the 2D temperature data, calculating the distances comprises a larger fraction of the calculations than for the other data. The other three methods do not use the standard deviation and are thus less affected.

\begin{figure}
\centering
\includegraphics[width=\textwidth]{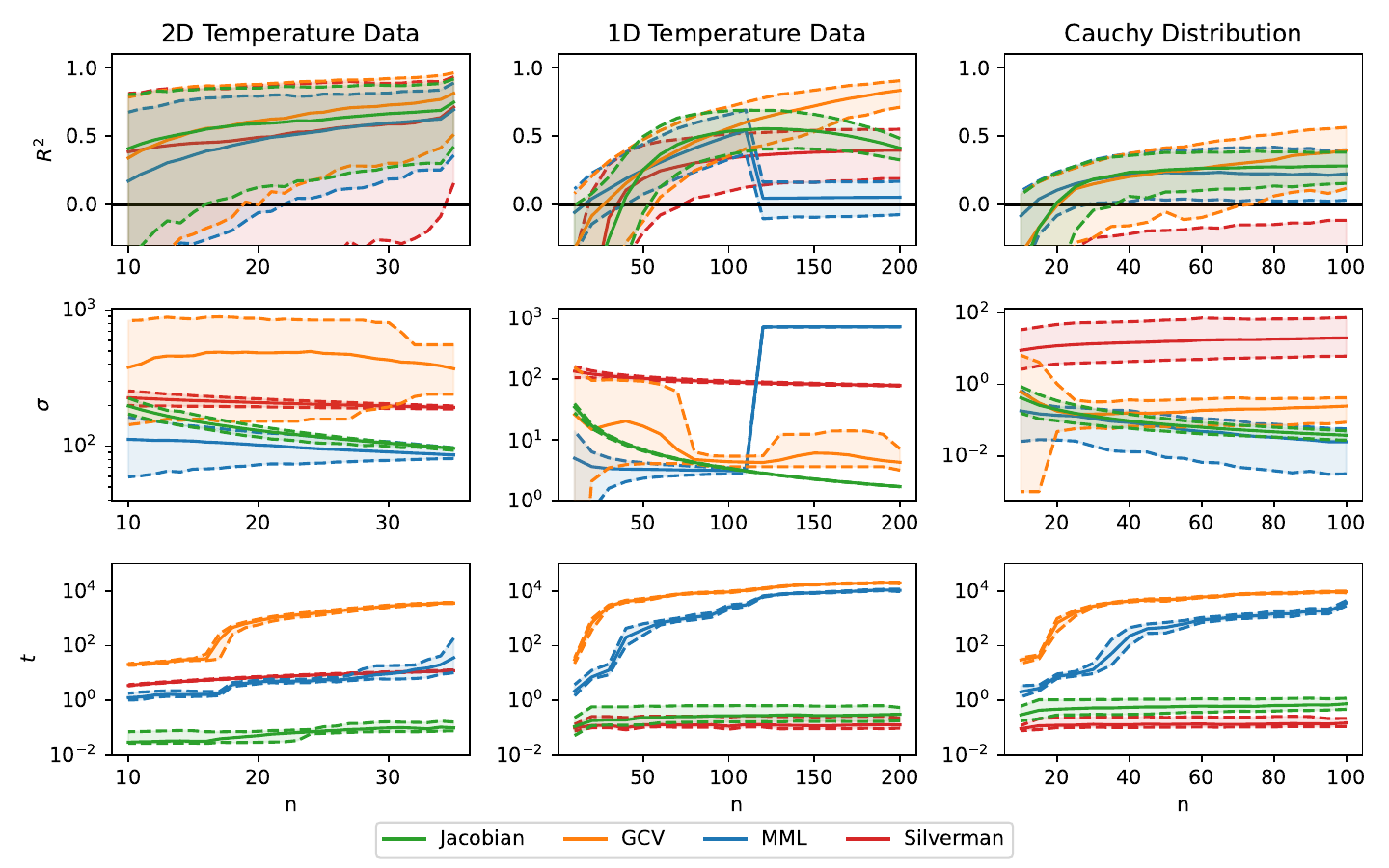}
\caption{Mean together with first and ninth deciles for explained variance on test data, $R^2$; selected bandwidth, $\sigma$; and computation time in milliseconds $t$, for different training sample sizes, using the four bandwidth selection methods. 
The Jacobian and Silverman's methods are several orders of magnitude faster than the two other methods. They are also much more stable in terms of bandwidth selection. In terms of prediction, the Jacobian method generally performs better than, or on par with, the competing methods, except compared to cross-validation when $n$ is large. For the 1D temperature data, MML gets stuck in a local minimum. For the Cauchy data, the, slightly slower, median version of the Jacobian method was used.}
\label{fig:sweep_n}
\end{figure}

\begin{figure}
\centering
\includegraphics[width=\textwidth]{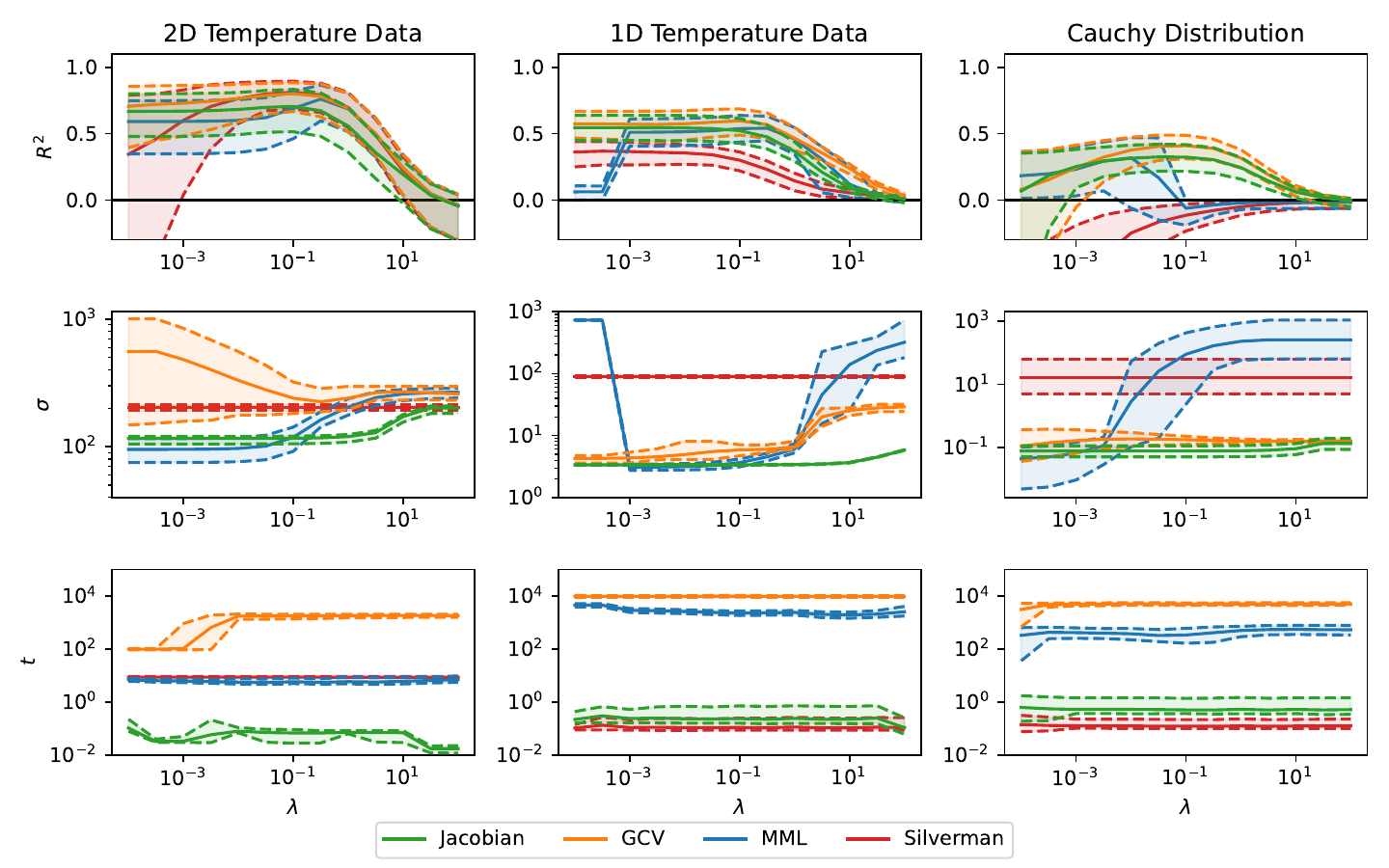}
\caption{Mean together with first and ninth deciles for explained variance on test data, $R^2$; selected bandwidth, $\sigma$; and computation time in milliseconds $t$, for different regularization strengths, using the four bandwidth selection methods. 
The Jacobian and Silverman's methods are several orders of magnitude faster than the two other methods. They are also much more stable in terms of bandwidth selection. In terms of prediction, the Jacobian method generally performs better than, or on par with, the competing methods. For both the 1D temperature data and the Cauchy data, MML gets stuck in local minima. For the Cauchy data, the, slightly slower, median version of the Jacobian method was used.}
\label{fig:sweep_lbda}
\end{figure}

\section{Conclusions}
We proposed a computationally low-cost method for choosing the bandwidth for kernel ridge regression with the Gaussian kernel. The method was motivated by the observed improved generalization properties of neural networks trained with Jacobian regularization. 
By selecting a bandwidth with Jacobian control, we implicitly constrain the derivatives of the inferred function.  To achieve this, we derived a simple, closed-form approximate expression for the Jacobian of Gaussian KRR as a function of the bandwidth and were thus able to find an optimal bandwidth for Jacobian control.
We demonstrated how selecting the optimal bandwidth is a trade-off between utilizing a well-conditioned training data kernel matrix and a slow decay of the inferred function toward the default value. 

In terms of model performance, our method is on par with cross-validation and marginal likelihood maximization, but up to a million times faster on the considered data. Compared to Silverman's method, our method is superior in terms of model performance.

Even though we only considered Jacobian bandwidth selection for the Gaussian kernel, the principle holds for any kernel. That, however, requires new, kernel-specific, estimates of the kernel matrix norms. Similarly, in the Gaussian case, the estimate of the norm of the inverse kernel matrix could potentially be further improved. These research problems are left for future work.

Code is available at \url{https://github.com/allerbo/jacobian_bandwidth_selection}.


\clearpage
\appendix

\section{Proofs}
\label{sec:proofs}

\begin{proof}[Proof of Proposition \ref{thm:best_sigma}]~\\
Denote $d:=\frac{2l_{\max}}{((n-1)^{1/p}-1)\pi}$. Then
$$J^a_2(\sigma,l,n,p,\lambda)= J^a_2(\sigma,n,d,\lambda)=\frac1{\sigma\left(n\exp\left(-\left(\frac\sigma d\right)^2\right)+\lambda\right)},$$
from which we obtain
\begin{align*}
\lim_{\sigma \to 0^+}&J^a_2(\sigma)=+\infty
\intertext{and}
\lim_{\sigma \to +\infty}&J^a_2(\sigma)=
\begin{cases}
+\infty\text{ if }\lambda=0\\
0\text{ if }\lambda>0.\\
\end{cases}\\
\end{align*}

We now identify stationary points by setting the derivative to 0.
$$\frac{\partial J^a_2(\sigma)}{\partial\sigma}=-\frac{\exp\left(\left(\frac\sigma d\right)^2\right)\left(n+\lambda\exp\left(\left(\frac\sigma d\right)^2\right)-2n\left(\frac\sigma d\right)^2\right)}{\left(\sigma\left(n+\lambda\exp\left(\left(\frac\sigma d\right)^2\right)\right)\right)^2}.$$
\begin{equation*}
\begin{aligned}
&n+\lambda e^{\left(\frac\sigma d\right)^2}-2n\left(\frac\sigma d\right)^2=0\iff-\frac{\lambda \sqrt{e}} {2n}=e^{\frac12-\left(\frac\sigma d\right)^2}\left(\frac12-\left(\frac\sigma d\right)^2\right)\\
&\iff\left(\frac12-\left(\frac\sigma d\right)^2\right)=W\left(-\frac{\lambda \sqrt{e}} {2n}\right)\implies\sigma=\frac d{\sqrt{2}} \sqrt{1-2W\left(-\frac{\lambda \sqrt{e}} {2n}\right)},
\end{aligned}
\end{equation*}
where $W$ denotes the Lambert $W$ function. Since this function has real outputs only if its argument is greater than $-e^{-1}$, in order to obtain stationary points we need
$$-\frac{\lambda \sqrt{e}} {2n}\geq -e^{-1}\iff \lambda\leq 2ne^{-3/2}$$
which gives us the two stationary points 
\begin{align*}
&\sigma_0=\frac{\sqrt{2}}{\pi}\frac{l_{\max}}{(n-1)^{1/p}-1} \sqrt{1-2W_0\left(-\frac{\lambda \sqrt{e}} {2n}\right)}
\intertext{and}
&\sigma_{-1}=\frac{\sqrt{2}}{\pi}\frac{l_{\max}}{(n-1)^{1/p}-1} \sqrt{1-2W_{-1}\left(-\frac{\lambda \sqrt{e}} {2n}\right)}.
\end{align*}
$W_{-1}(x)<W_0(x)$ for $x\in(-e^{-1},0)$, which implies that $\sigma_0<\sigma_{-1}$. Combined with the limits above, this implies that, when existing, $\sigma_0$ is a local minimum and $\sigma_{-1}$ is a local maximum.

Finally, for $\lambda=0$, $W_{0}(0)=0$ and $\lim_{\lambda\to0}W_{-1}\left(-\frac{\lambda\sqrt{e}}{2n}\right)=-\infty$, which means that in the absence of $\lambda$, $\sigma_0=\frac{\sqrt{2}}{\pi}\frac{l_{\max}}{(n-1)^{1/p}-1}$ and $\sigma_{-1}=+\infty$.
\end{proof}

\begin{proof}[Proof of Proposition \ref{thm:dydx}]~\\
We first note that for $\dvi=\xsv-\xvi$, $\frac{\partial \fh(\xsv)}{\partial\dvi}=\frac{\partial \fh(\xsv)}{\partial\xsv}$:
\begin{equation*}
\begin{aligned}
&\frac{\partial \dvi}{\partial\xsv}=\frac{\partial(\xsv-\xvi)}{\partial \xsv}=\frac{\partial \xsv}{\partial \xsv}-\frac{\partial \xvi}{\partial \xsv}=\I-\bm{0}=\I\\
&\frac{\partial \fh(\xsv)}{\partial \xsv}=\frac{\partial \fh(\xsv)}{\partial \dvi}\cdot\frac{\partial \dvi}{\partial \xsv}=\frac{\partial \fh(\xsv)}{\partial \dvi}\cdot\I=\frac{\partial \fh(\xsv)}{\partial \dvi}.
\end{aligned}
\end{equation*}
Now,
\begin{equation*}
\begin{aligned}
\left\|\frac{\partial \fh(\xsv)}{\partial \dvi}\right\|_{2}&=\left\|\frac{\partial \fh(\xsv)}{\partial \xsv}\right\|_{2}
=\left\|\frac{\partial}{\partial\xsv}\left(\kv(\xsv,\X)^\top\cdot \left(\K(\X,\X)+\lambda\I\right)^{-1}\cdot\yv\right)\right\|_2\\
=&\left\|\frac{\partial \kv(\xsv,\X)^\top}{\partial \xsv}\cdot\left(\K(\X,\X)+\lambda\I\right)^{-1}\yv\right\|_2\\
\leq&\left\|\frac{\partial \kv(\xsv,\X)^\top}{\partial \xsv}\right\|_2\cdot\left\|\left(\K(\X,\X)+\lambda\I\right)^{-1}\right\|_2\cdot\left\|\yv\right\|_2\\
\leq&\sqrt{n}\cdot\left\|\frac{\partial \kv(\xsv,\X)^\top}{\partial \xsv}\right\|_1\cdot\left\|\left(\K(\X,\X)+\lambda\I\right)^{-1}\right\|_2\cdot\left\|\yv\right\|_2\\
=&\sqrt{n}\cdot\left\|\begin{bmatrix}\frac{\partial k(\xsv,\bm{x_1})}{\partial \xsv} & \frac{\partial k(\xsv,\bm{x_2})}{\partial \xsv}  & \dots & \frac{\partial k(\xsv,\bm{x_n})}{\partial \xsv} \end{bmatrix}\right\|_1\\
&\cdot\left\|\left(\K(\X,\X)+\lambda\I\right)^{-1}\right\|_2\cdot\left\|\yv\right\|_2\\
=&\sqrt{n}\cdot\max_{\xvi\in\X}\left\|\frac{\partial k(\xsv,\xvi)}{\partial \xsv}\right\|_1\cdot\left\|\left(\K(\X,\X)+\lambda\I\right)^{-1}\right\|_2\cdot\left\|\yv\right\|_2\\
\stackrel{(a)}=&\sqrt{n}\cdot\max_{\xvi\in\X}\left\|\frac{\partial k(\xvi+\dvi,\xvi)}{\partial \dvi}\right\|_1\cdot\left\|\left(\K(\X,\X)+\lambda\I\right)^{-1}\right\|_2\cdot\left\|\yv\right\|_2,\\
\end{aligned}
\end{equation*}
where in $(a)$, we used the chain rule together with $\frac{\partial \dvi}{\partial \xsv}=\I$.
\end{proof}

\begin{proof}[Proof of Proposition \ref{thm:max_der_gauss}]~\\
In spherical coordinates,
$$\left\|\frac{\partial k_G(\dvi,\sigma)}{\partial \dvi}\right\|_1=\left|\frac{\partial k_G(\dvi,\sigma)}{\partial d_i}\right|+\sum_{j=2}^p\left|\frac1{d_i}\frac{\partial k_G(\dvi,\sigma)}{\partial \theta_j}\right|,$$
where the sum is over the angular coordinates. Since the Gaussian kernel is rotationally invariant, this sum is 0 and 
$$\left\|\frac{\partial k_G(\dvi,\sigma)}{\partial \dvi}\right\|_1=\left|\frac{\partial}{\partial d_i}\exp\left(-\frac{d_i^2}{2\sigma^2}\right)\right|=\frac{d_i}{\sigma^2}\exp\left(-\frac{d_i^2}{2\sigma^2}\right).$$
To find the $d_i$ that maximizes the derivative, we look where the second derivative is zero.

$$\frac{\partial}{\partial d_i}\left|\frac{\partial k_G(d_i,\sigma)}{\partial d_i}\right|=\left(\left(\frac{d_i}{\sigma^2}\right)^2-\frac1{\sigma^2}\right)\exp\left(-\frac{d_i^2}{2\sigma^2}\right).$$

Setting the second derivative to zero amounts to
$$\left(\frac{d_i}{\sigma^2}\right)^2=\frac1{\sigma^2}\iff d_i^2=\sigma^2\implies d_i=\sigma.$$
Plugging this into the first derivative we obtain $\frac1\sigma\exp\left(-\frac12\right)$, which is greater than 
$$\left|\frac{\partial k_G(0,\sigma)}{\partial d_i}\right|=\left|\frac{\partial k_G(\infty,\sigma)}{\partial d_i}\right|=0,$$
and consequently
$$\max_{\dvi}\left\|\frac{\partial k_G(\dvi,\sigma)}{\partial \dv}\right\|_1=\max_{d_i}\left|\frac{\partial k_G(d_i,\sigma)}{\partial d_i}\right|=\frac1{\sigma\sqrt{e}}.$$
\end{proof}

\begin{proof}[Proof of Proposition \ref{thm:min_ki_gauss}]~\\
To alleviate notation, from now on we do not explicitly state that $\K$ depends on $\X$. We first note that $\left\|\left(\K+\lambda\I\right)^{-1}\right\|_2=\frac1{s_n\left(\K+\lambda\I\right)}$, where $s_n$ denotes the smallest singular value of $\K$. Since $\K$ is symmetric and positive semi-definite, it is diagonalizable as $\K=\U\bm{\Sigma}\U^\top$, while $\lambda\I=\lambda\U\U^\top$, which means that $\K+\lambda\I=\U\left(\bm{\Sigma}+\lambda\I\right)\U^\top$, i.e.\ the singular values of $\K+\lambda\I$ are the singular values of $\K$, shifted by $\lambda$.

According to \citet{bermanis2013multiscale}, 
for $\xv\in \R^p$, where each $x_i$ is restricted to an interval of length $l_i$, $i=1,2,\dots p$, for a Gaussian kernel matrix $\K\in \R^{m\times n}$, with singular values $s_1,\dots s_n$, the number of singular values larger than $\delta\cdot s_1$ for some $\delta>0$, $R_\delta(\K)$, is bounded according to
\begin{equation*}
\begin{aligned}
R_{\delta}(\K)&:=\#\left\{j:\frac{s_j(\K)}{s_1(\K)}\geq \delta\right\}\leq \prod_{i=1}^d\left(\frac 2\pi \frac {l_i}\sigma\sqrt{\log(1/\delta)}+1\right)\\
&\leq\left(\frac 2\pi \frac {l_{\max}}\sigma\sqrt{\log(1/\delta)}+1\right)^p.
\end{aligned}
\end{equation*}
Solving for $\delta$, we obtain
\begin{equation*}
\begin{aligned}
&R_\delta(\K)\leq \left(\frac 2\pi \frac {l_{\max}}\sigma\sqrt{\log(1/\delta)}+1\right)^p\\
\iff &(R_\delta(\K)^{1/p}-1)\frac {\pi\sigma}{2l_{\max}}\leq\sqrt{\log(1/\delta)}\\
\iff &\delta\leq \exp\left(-\left(\frac{(R_\delta(\K)^{1/p}-1)\pi\sigma}{2l_{\max}}\right)^2\right).
\end{aligned}
\end{equation*}
Now, if $R_\delta(\K)=n$, then all singular values (including $s_n$) are larger than or equal to $\delta\cdot s_1$. If $R_\delta(\K)=n-1$, then all but one (namely $s_n$) of the singular values are larger than or equal to $\delta\cdot s_1$. So for $R_\delta(\K)=n-1$, $s_n< \delta \cdot s_1$, which implies

\begin{equation*}
\begin{aligned}
s_n&<s_1\delta\leq s_1\exp\left(-\left(\frac{((n-1)^{1/p}-1)\pi\sigma}{2l_{\max}}\right)^2\right)\\
&\leq n\exp\left(-\left(\frac{((n-1)^{1/p}-1)\pi\sigma}{2l_{\max}}\right)^2\right),
\end{aligned}
\end{equation*}
where we used $s_1(\K)\cdot\leq n\cdot\|\K\|_{\max}=n\cdot1$.

Thus 
$$\left\|\left(\K+\lambda\I\right)^{-1}\right\|_2=\frac1{s_n+\lambda} \geq \frac1{n\exp\left(-\left(\frac{((n-1)^{1/p}-1)\pi\sigma}{2l_{\max}}\right)^2\right)+\lambda}.$$
\end{proof}

\clearpage
\bibliography{refs}
\bibliographystyle{apalike}
\end{document}